\def\eqref#1{eq.~\ref{#1}}
\def\1{\bm{1}}
\DeclareMathAlphabet{\mathsfit}{\encodingdefault}{\sfdefault}{m}{sl}
\SetMathAlphabet{\mathsfit}{bold}{\encodingdefault}{\sfdefault}{bx}{n}
\def\gL{{\mathcal{L}}}
\def\tc{\lceil t \rceil}
\newcommand{\E}{\mathbb{E}}
\theoremstyle{plain}
\newtheorem{theorem}{Theorem}[section]
\newtheorem{proposition}[theorem]{Proposition}
\newtheorem{lemma}[theorem]{Lemma}
\theoremstyle{definition}
\theoremstyle{remark}
\newcommand{\LFM}[0]{\gL_{\rm FM}}
\newcommand{\LTFM}[0]{\gL_{\rm TFM}}
\newcommand{\algname}{TFM}
\newcommand{\algnameode}{TFM-ODE}
\providecommand{\section}{}
\renewcommand{\section}{%
  \@startsection{section}{1}{\z@}%
                {-1.0ex \@plus -0.5ex \@minus -0.2ex}%
                { 1.0ex \@plus  0.3ex \@minus  0.2ex}%
                {\large\bf\raggedright}%
}
\providecommand{\subsection}{}
\renewcommand{\subsection}{%
  \@startsection{subsection}{2}{\z@}%
                {-0.75ex \@plus -0.5ex \@minus -0.2ex}%
                { 0.75ex \@plus  0.2ex}%
                {\normalsize\bf\raggedright}%
}
\providecommand{\subsubsection}{}
\renewcommand{\subsubsection}{%
  \@startsection{subsubsection}{3}{\z@}%
                {-0.5ex \@plus -0.5ex \@minus -0.2ex}%
                { 0.5ex \@plus  0.2ex}%
                {\normalsize\bf\raggedright}%
}
\providecommand{\paragraph}{}
\renewcommand{\paragraph}{%
  \@startsection{paragraph}{4}{\z@}%
                {0.3ex \@plus 0.2ex \@minus 0.2ex}%
                {-1em}%
                {\normalsize\bf}%
}
\title{Trajectory Flow Matching with Applications to Clinical Time Series Modeling}
\author{%
  Xi Zhang$^{1,2}$ \thanks{Joint first authorship} \qquad
  Yuan Pu$^{3}$ \footnotemark[1] \qquad
  Yuki Kawamura$^{4}$ \qquad
  Andrew Loza$^{3}$ \qquad \\
  \textbf{Yoshua Bengio}$^{2,5,6}$ \qquad
  \textbf{Dennis L. Shung}$^{3}$ \thanks{Joint senior authorship. Correspondence to \url{alexander.tong@mila.quebec}\\
  \parindent 1.8em\indent
  Code available at: \url{https://github.com/nZhangx/TrajectoryFlowMatching}} \qquad
  \textbf{Alexander Tong}$^{2,5}$
  \footnotemark[2] \\
  \\
   $^1$McGill University, $^2$Mila - Quebec AI Institute, \\$^3$Yale School of Medicine\\ $^4$School of Clinical Medicine, University of Cambridge, \\ $^5$Universit\'{e} de Montr\'{e}al, $^6$CIFAR Fellow \\ %
}
\begin{document}

\maketitle

\begin{abstract}

Modeling stochastic and irregularly sampled time series is a challenging problem found in a wide range of applications, especially in medicine. Neural stochastic differential equations (Neural SDEs) are an attractive modeling technique for this problem, which parameterize the drift and diffusion terms of an SDE with neural networks. However, current algorithms for training Neural SDEs require backpropagation through the SDE dynamics, greatly limiting their scalability and stability. 
To address this, we propose \textbf{Trajectory Flow Matching} (TFM), which trains a Neural SDE in a \textit{simulation-free} manner, bypassing backpropagation through the dynamics. TFM leverages the flow matching technique from generative modeling to model time series. In this work we first establish necessary conditions for TFM to learn time series data. Next, we present a reparameterization trick which improves training stability. Finally, we adapt TFM to the clinical time series setting, demonstrating improved performance on four clinical time series datasets both in terms of absolute performance and uncertainty prediction, a crucial parameter in this setting.

\end{abstract}

\section{Introduction}

Real world problems often involve systems that evolve continuously over time, yet these systems are usually noisy and irregularly sampled. In addition, real-world time series often relate to other covariates, leading to complex patterns such as intersecting trajectories. For instance, in the context of clinical trajectories in healthcare, patients' vital sign evolution can follow drastically different, crossing paths even if the initial measurements are similar, due to the influence of the covariates such as medication intervention and underlying health conditions. These covariates can be time-varying or static, and often sparse.

Differential equation-based dynamical models are proficient at learning continuous variables without imputations \citep{neuralode, latentode, kidger2021efficient}. Nevertheless, systems governed by ordinary differential equations (ODEs) or stochastic differential equations (SDEs) are unable to accommodate intersecting trajectories, and thus requires modifications such as augmentation or modelling higher-order derivatives \citep{dupont2019}. While ODEs model deterministic systems, SDEs contain a diffusion term and can better represent the inherent uncertainty and fluctuations present in many real world systems. However, fitting stochastic equations to real life data is challenging because they have thus far required time-consuming backpropagation through an SDE integration. 

In the domain of generative models, diffusion models \citep{ho2020denoising, nichol2021improved, song2020score} and more recently flow matching models \citep{lipman2022flow, albergo_stochastic_2023,li_scalable_2020} have had enormous success by training dynamical models in a \textit{simulation-free} framework. The simulation-free framework facilitates the training of much larger models with significantly improved speed and stability. In this work we generalize simulation-free training for fitting stochastic differential equations to time-series data, to learn population trajectories while preserving individual characteristics with conditionals. We present this method as \textbf{Trajectory Flow Matching}. %
We demonstrate that our method outperforms current state of the art time series modelling architecture including RNN,  ODE based and flow matching methods. We empirically demonstrate the utility of our method in clinical applications where hemodynamic trajectories are critical for ongoing dynamic monitoring and care. We applied our method to the following longitudinal electronic health record datasets: medical intensive care unit (MICU) data of patients with sepsis, ICU patients at risk for cardiac arrest, Emergency Department (ED) data of patients with acute gastrointestinal bleeding, and MICU data of patients with acute gastrointestinal bleeding. 

Our main contributions are:
\begin{itemize}
    \item We prove the conditions under which continuous time dynamics can be trained simulation-free using matching techniques. %
    \item We extend the approach to irregularly sampled trajectories with a \textit{time predictive loss} and to estimate uncertainty using an \textit{uncertainty prediction loss}.
    \item We empirically demonstrate that our approach reduces the error by 15-83\% when applied to the real world clinical data modelling.
\end{itemize}

\begin{figure}
    \centering
    \includegraphics[width=0.7\textwidth]{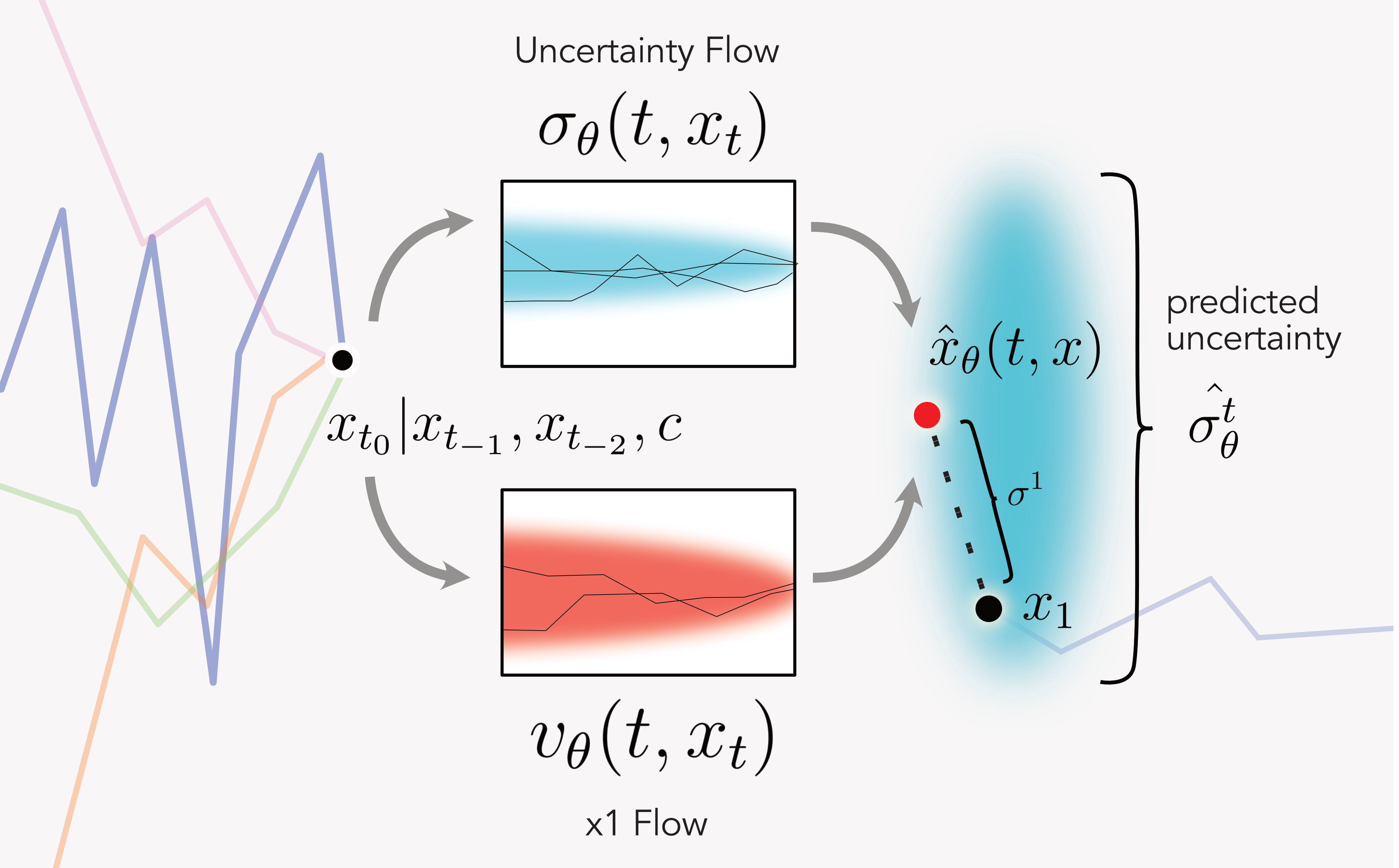}
    \caption{Trajectory Flow Matching trains both an estimator of the next timepoint ($\hat{x}_\theta(t,x)$) and an estimation of the uncertainty ($\sigma_\theta(t, x_t)$). Using the conditional flow matching framework, these can be used to predict the instantaneous velocity $v_\theta(t, x_t)$ and future observations. Both flows are conditioned on past data $x_{[t-h, t-1]}$ and conditional variables $c$.}
    \label{fig:concept}
\end{figure}

\section{Preliminaries}

\subsection{Notation} 

We consider the setting of a distribution of trajectories over $\mathbb{R}^d$ denoted $\mathcal{X} := \{x^1, x^2, \ldots, x^n\}$ where each $x^i$ is a vector of $T$ datapoints i.e.\ $x^i := \{x^i_1, x^i_2, \ldots, x^i_T\}$ with associated times $t^i := \{t^i_1, t^i_2, \ldots, t^i_T\}$. Let $x^i_{[t-h, t-1]}$ denote a vector of the last $h$ observed time points. We denote a (Lipschitz smooth) time dependent vector field conditioned on arbitrary conditions $c \in \mathbb{R}^e$ $v(t, x_t, x_{[t-h, t-1]}, c) \to \frac{dx}{dt}: ([0,1], \mathbb{R}^d, \mathbb{R}^{h \times d}, \mathbb{R}^e) \to \mathbb{R}^d$ with flow $\phi_t(v)$ which induces the time-dependent density $p_t = \phi_t(v)_\#(p_0)$ for any density $p_0: \mathbb{R}^d \to \mathbb{R}_+$ with $\int_{\mathbb{R}^d} p_0 = 1$. We also consider the coupling $\pi(x_0, x_1)$ which operates on the product space of marginal distributions $p_0$, $p_1$.

\subsection{Neural Stochastic Differential Equations}

A stochastic differential equation (SDE) can be expressed in terms of a smooth drift $f: [0, T] \times \mathbb{R}^d \to \mathbb{R}^d$ and diffusion $g: [0, T] \times \mathbb{R}^d \to \mathbb{R}^{d^2}$ in the Ito sense as:
$$
d x_t = f\, dt + g\, dW_t
$$
where $W_t: [0, T] \to \mathbb{R}^d$ is the $d$-dimensional Wiener process. A density $p_0(x_0)$ evolved according to an SDE induces a collection of marginal distributions $p_t(x_t)$ viewed as a function $p: [0, T] \times \mathbb{R}^d \to \mathbb{R}_+$. In a \textit{Neural} SDE~\citep{li_scalable_2020,kidger2021neural,kidger2021efficient} the drift and diffusion terms are parameterized with neural networks $f_\theta(t, x_t)$ and $g_\theta(t, x_t)$.
\begin{equation}\label{eq:sde}
    d x_t = f_\theta(t, x_t) dt + g_\theta(t, x_t) d W_t
\end{equation}

where the goal is to select $\theta$ to enforce $x_T \sim X_{true}$ for some distributional notion of similarity such as the Wasserstein distance ~\citep{kidger2021efficient} or Kullback-Leibler divergence ~\citep{li_scalable_2020}. However, these objectives are \textit{simulation-based}, requiring a backpropagation through an SDE solver, which suffers from severe speed and stability issues. While some issues such as memory and numerical truncation can be ameliorated using the adjoint state method and advanced numerical solvers ~\citep{kidger2021efficient}, optimization of Neural SDEs is still a significant issue.

We note that in the special case of zero-diffusion (i.e.\ $g_\theta(t, x_t) = 0$) this reduces to a neural \textit{ordinary} differential equation (Neural ODE)~\citep{neuralode}, which is easier to optimize than SDEs, but still presents challenges to scalability.

\subsection{Matching algorithms}
Matching algorithms are a \textit{simulation-free} class of training algorithms which are able to bypass backpropagation through the solver during training by constructing the marginal distribution as a mixture of tractable conditional probability paths.

The marginal density $p_t$ induced by \eqref{eq:sde} evolves according to the \textit{Fokker-Plank} equation (FPE):
\begin{equation}
    \partial_t p_t = - \nabla \cdot (p_t f_t) + \frac{g^2}{2} \Delta p_t
\end{equation}
where $\Delta p_t = \nabla \cdot (\nabla p_t)$ denotes the \textit{Laplacian} of $p_t$ and gradients are taken with respect to $x_t$. 

Matching algorithms first construct a factorization of $p_t$ into conditional densities $p_t(x_t | z)$ such that $p_t = \mathbb{E}_{q(z)} \left [ p_t(x_t | z) \right ]$ and where $p_t(x_t | z)$ is generated by an SDE $d x_t = v_t(x_t | z) dt + \sigma_t(x_t | z) d W_t$. Given this construction it can be shown that the minimizer of 
\begin{equation}\label{eq:matching}
    \mathcal{L}_\text{match}(\theta) := \mathbb{E}_{t, q(z), p_t(x | z)} \left [\left \| f_\theta(t, x_t) - v_t(x_t | z) \right \|^2 + \lambda_t^2 \left  \| g_\theta(t, x_t) - \sigma_t(x_t | z) \right \|^2 \right ]
\end{equation}
satisfies the FPE of the marginal $p_t$. This is especially useful in the generative modeling setting where $q_0$ is samplable noise (e.g.\ $\mathcal{N}(0, 1)$) and $q_1$ is the data distribution. Then we can define $z := (x_0, x_1)$ as a tuple of noise and data with $q(z) := q_0(x_0) \otimes q_1(x_1)$. This makes \eqref{eq:matching} optimize a model which will draw new samples according to the data distribution $q_1(x_1)$ using
\begin{equation}
    x_0 \sim q_0; \quad x_1 = \int_0^1 f_\theta(t, x_t) dt + g_\theta(t, x_t) d W_t
\end{equation}
with the integration computed numerically using any off-the-shelf SDE solver. While this is guaranteed to preserve the distribution over time, it is not guaranteed to preserve the \textit{coupling} of $q_0$ and $q_1$ (if given). 

\paragraph{Paired bridge matching}
In generative modeling random pairings~\citep{liu2022learning,albergo_building_2023,albergo_stochastic_2023} or optimal transport~\citep{tong_minibatch,pooladian_2023_multisample} pairings are constructed for the conditional distribution $q(z)$. However, in some problems we would like to match pairs of points as is the case in image-to-image translation~\citep{isola_image_2017,liu_sb_2023,somnath_aligned_2023}. In this case, training data comes as pairs $(x_0, x_1)$. In this case we set $q(z) := q(x_0, x_1)$ to be samples from these known pairs, and optimize \eqref{eq:matching}. While empirically, these models perform well, there are no guarantees that the coupling will be preserved outside of the special case when data comes from the (entropic) optimal transport coupling $\pi^*_\varepsilon(q_0, q_1)$ and defined as:
\begin{equation}\label{eq:eot}
    \pi^*_\varepsilon(q_0, q_1) = \underset{\pi \in U(q_0, q_1)}{\arg \operatorname{min}}\int d(x_0,x_1)^2\, d\pi(x_0,x_1) + \varepsilon \operatorname{KL}(\pi \| q_0 \otimes q_1),
\end{equation}
where $U(q_0, q_1)$ is the set of admissible transport plans (i.e.\ joint distributions over $x_0$ and $x_1$ whose marginals are equal to $q_0$ and $q_1$) as shown in \citep{shi_diffusion_2023} for some regularization parameter $\varepsilon \in \mathbb{R}_{\ge 0}$. 

\section{Trajectory Flow Matching}\label{sec:tfm}

We now describe our simulation-free method to learn SDEs on time-series data using \textit{trajectory flow matching} as summarized in Alg.~\ref{alg:tfm}.
In the case of time series we need to ensure that trajectory couplings are preserved. We first set out a general algorithm for flow matching on vector fields in \S\ref{sec:tfm:coupling_preservation} then present a numerical reparameterization which we find stabilizes training in \S\ref{sec:tfm:target}, a next observation prediction for irregularly sampled time series in \S\ref{sec:tfm:uneven}, and finally present how to learn the noise in \S\ref{sec:tfm:uncertainty}.

\subsection{Preserving Couplings}\label{sec:tfm:coupling_preservation}

In this section, we assume access to fully observed and evenly spaced trajectories $\mathcal{X} = (x^1, x^2, \ldots, x^n)$ with $x^i := (x^i_1, x^i_2, \ldots, x^i_T)$ for clarity and notational simplicty. We note that our method is easily extensible to the more general setting of irregularly sampled trajectories. In this simplified case we let
\begin{align}
    z &:= (x_1, x_2, \ldots, x_T) \label{eq:tfm:z} \\
    q(z) &:= \mathcal{U}(\mathcal{X}) \\
    p_t(x | z) &:= \mathcal{N}((\lceil t \rceil - t) x_{\lfloor t \rfloor} + (t - \lfloor t \rfloor) x_{\lceil t \rceil}, \sigma^2 (\lceil t \rceil - t) (t - \lfloor t \rfloor) \mathbf{I}) \label{eq:tfm:pt} \\
    u_t(x | z) &:= \frac{x_{\lceil t \rceil} - x_t}{\lceil t \rceil - t} \label{eq:tfm:flow}
\end{align}
where $\mathcal{U}(\mathcal{X})$ is the uniform empirical distribution over $\mathcal{X}$, $\lceil \cdot \rceil$, $\lfloor \cdot \rfloor$ are the ceiling and floor functions, and $\mathcal{N}(\cdot, \cdot)$ is the multivariate normal distribution. This is a valid regression in the sense that a function minimized with Alg.~\ref{alg:tfm} will return a stochastic process that will match the observed marginal distributions over time as shown in the following lemma.
\begin{lemma}
    The SDE $d x_t = u_t(x | z) dt + \sigma^2 d W_t$ where $u_t$ is defined in \eqref{eq:tfm:flow} generates $p_t(x | z)$ in \eqref{eq:tfm:pt} with initial condition $p_0 := \delta_{x_1}$ where $\delta$ is the Dirac delta function.
\end{lemma}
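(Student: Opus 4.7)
The plan is to reduce the claim to the classical Brownian bridge on each unit interval $(k, k+1)$ and then verify the Fokker--Planck equation (FPE) directly.

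First, I would observe that on each open interval $t \in (k, k+1)$ the indices $\lfloor t \rfloor = k$ and $\lceil t \rceil = k+1$ are constant, so the drift $u_t(x \mid z) = (x_{k+1} - x)/(k+1-t)$ together with the constant diffusion is exactly the SDE of a Brownian bridge pinned at $x_k$ at time $k$ and at $x_{k+1}$ at time $k+1$. The claimed marginal $p_t(x \mid z)$ is precisely the well-known Gaussian law of this bridge, so the statement reduces to verifying the FPE on each such interval, together with the appropriate weak-limit matching at the integer boundary times.

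Next, I would verify $\partial_t p_t = -\nabla \cdot (p_t u_t) + (\sigma^2/2)\Delta p_t$ by direct computation with the Gaussian ansatz $p_t = \mathcal{N}(x; \mu_t, v_t \mathbf{I})$, where $\mu_t = (k+1 - t) x_k + (t - k) x_{k+1}$ and $v_t = \sigma^2 (k+1-t)(t-k)$. Computing $\partial_t \log p_t$ yields a constant term, a term linear in $(x - \mu_t)$, and a term quadratic in $\|x - \mu_t\|^2$. On the right-hand side, the algebraic identity $x_{k+1} - x = (k+1 - t)(x_{k+1} - x_k) - (x - \mu_t)$ splits $-\nabla \cdot (p_t u_t)$ into matching pieces, and $(\sigma^2/2)\Delta p_t$ contributes the remaining quadratic and constant parts. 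Equating coefficients reduces the FPE to the two scalar identities $\dot \mu_t = x_{k+1} - x_k$ and $\dot v_t = \sigma^2 (2k+1 - 2t)$, both of which follow immediately from differentiating the closed forms of $\mu_t$ and $v_t$; the constant-term balance is then automatic.

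Finally, I would handle the initial condition and the joining across integer times by noting that $v_t \to 0$ while $\mu_t$ converges to $x_k$ (resp.\ $x_{k+1}$) as $t \to k^+$ (resp.\ $t \to (k+1)^-$), so $p_t$ converges weakly to $\delta_{x_k}$ at each integer time. With the convention that the first segment begins at the given datum $x_1$, this recovers the stated initial condition $p_0 = \delta_{x_1}$, and the successive bridges glue continuously in the weak topology, yielding the full statement on the entire time horizon.

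The main obstacle I anticipate is not the FPE algebra, which is a routine exercise for a Gaussian ansatz, but the bookkeeping at integer times where the drift $u_t$ becomes singular and $p_t$ collapses to a Dirac mass. These degeneracies are standard features of Brownian bridges and can be handled by passing to limits along $t_n \to k$, but they require care, particularly in justifying that the limiting measure from the right of one interval agrees with the delta condition from the left of the next.
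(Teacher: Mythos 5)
Your proof is correct and rests on the same core observation as the paper's: on each unit interval $(k,k+1)$ the SDE is exactly the Brownian bridge pinned at consecutive data points, whose marginal is the stated Gaussian. The paper stops there --- it writes out the $\lceil t\rceil = 1$ segment, identifies the bridge, and cites its well-known marginal --- whereas you additionally verify the Fokker--Planck equation by plugging in the Gaussian ansatz, using the decomposition $x_{k+1}-x = (k+1-t)(x_{k+1}-x_k) - (x-\mu_t)$, and matching coefficients to reduce the check to $\dot\mu_t = x_{k+1}-x_k$ and $\dot v_t = \sigma^2(2k+1-2t)$ (both of which I confirm hold). You also explicitly address the weak-limit gluing at integer times, which the paper leaves implicit. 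This extra work is sound and makes the argument self-contained, though it effectively re-derives the Brownian-bridge marginal that the paper simply invokes. One small remark that applies to both your proof and the paper's: the lemma writes the SDE with diffusion $\sigma^2\, dW_t$, but the stated variance $\sigma^2(\lceil t\rceil - t)(t-\lfloor t\rfloor)$ is consistent only with diffusion coefficient $\sigma$; your FPE term $(\sigma^2/2)\Delta p_t$ silently makes this correction, as does the paper's appeal to the standard bridge marginal, so the $\sigma^2$ in the SDE statement is evidently a typo for $\sigma$.
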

however, while useful, this is still insufficient for time series modeling, as it does not ensure coupling preservation. For intuition why this is an issue see Figure~\ref{fig:crossing_oscillation_exp}.

In \algname{} we ensure that the couplings are preserved for history lengths $h > 0$. i.e. $\hat{\pi}(x_{T- h}, x_{T - h + 1}, \ldots, x_T) = \pi(x_{T - h}, x_{T - h + 1}, \ldots, x_T)$. We first establish a method to ensure that these couplings are preserved allowing us to use simulation-free flow matching training for the time-series modeling task. Specifically, as long as the model takes as input $(x_{T- h}, x_{T - h + 1}, \ldots, x_T)$ in predicting the flow from $T \to T + 1$, then there exists a function $f_\theta(X_{T - h: T})$ such that the coupling is preserved.

\begin{algorithm}[t]
  \caption{General Trajectory Flow Matching}
  \label{alg:tfm}
\begin{algorithmic}
\State {\bfseries Input:} Trajectories $\mathcal{X} = \{x^1, x^2, \ldots, x^{n}\}$, noise $\sigma$, initial networks $v_{\theta}$ and $\sigma_{\theta}$.
\While{Training}
\State $x^i \sim \mathcal{U}(\mathcal{X}), \quad k \sim \mathcal{U}\{1, T-1\}, \quad t \sim \mathcal{U}(0, 1)$
\State $\mu_t \gets (1 - t) x_k^i + t x_{k+1}^i$
\State $x_t \sim \mathcal{N}(\mu_t, \sigma^2 t (1 - t)I)$
\State $\LTFM(\theta) \gets \left \| v_\theta(k + t, x_t) - \frac{x_{k+1}^i - x_t}{1 - t}\right \|^2$
\State $\mathcal{L}_{\sigma_t}(\theta) \gets \left \| \sigma_\theta(k + t, x_t) - \LTFM \right \|^2$
\State $\theta \gets \mathrm{Update}(\theta, \nabla_\theta \LTFM(\theta), \nabla_{\theta} \mathcal{L}_{\sigma_t}(\theta))$
\EndWhile
\State \Return $v_\theta, \sigma_\theta$
\end{algorithmic}
\end{algorithm}

\begin{proposition}[Coupling Preservation]
    Under mild regulatory criteria on $u_t(\cdot | z)$, $p_t$, and $q$, if 
    $$
    \mathbb{E}_{t\sim \mathcal{U}(0,T), z\sim q(z), c \sim q(c | z), x_t \sim p_t(x_t | z)} \| u_t(x_t | z, c) - u_t(x_t | c) \|_2^2 = 0
    $$
    and $z, q(z), p_t(x|z)$ and $u_t(x|z)$ are as defined in eqs.~\ref{eq:tfm:z}-\ref{eq:tfm:flow} then $\Pi(u)^\star = \Pi^\star(x_{1:T})$.
\end{proposition}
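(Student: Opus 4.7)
The plan is to reduce the statement to two standard ingredients plus one bookkeeping step that exploits the history window inside $c$. The starting point is to interpret the vanishing-expectation hypothesis via the usual $L^2$ argument: since
\[
\mathbb{E}_{t,z,c,x_t} \bigl\| u_t(x_t|z,c) - u_t(x_t|c) \bigr\|_2^2 = 0,
\]
the two integrands agree almost surely under the joint law $q(z)\,q(c|z)\,p_t(x_t|z)$. Here $u_t(x_t|c)$ must be defined (or identified) as the conditional expectation $\mathbb{E}[u_t(x_t|z,c)\mid t,x_t,c]$, i.e.\ the $L^2$-projection of the fully conditional drift onto the $\sigma$-algebra generated by $(t,x_t,c)$.

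Second, I would invoke the standard marginalization argument underlying conditional flow matching (\emph{cf.}~\citet{lipman2022flow}, and its extension to SDEs via the Fokker--Planck equation in the preceding section). Applied conditionally on $c$, it yields that the marginal drift $u_t(x_t|c)$ generates the marginal path
\[
p_t(x_t|c) = \mathbb{E}_{z\sim q(z|c)}\bigl[p_t(x_t|z)\bigr],
\]
via the continuity equation for the deterministic part (and the full FPE once the diffusion term is added in \S\ref{sec:tfm:uncertainty}). Crucially, by \eqref{eq:tfm:pt}--\eqref{eq:tfm:flow}, on the interval $t\in(k,k+1)$ the bridge endpoints are exactly $(x_k,x_{k+1})$, so the marginal $p_t(x_t|c)$ is the true conditional distribution of $x_t$ given $c$.

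The third step is the coupling propagation. Because $c = x_{[k-h,k-1]}$ with $h\ge 1$ includes $x_{k}$ by the time we are integrating the bridge from $k\to k+1$ (or, more precisely, the conditioning window at step $k+1$ contains $x_k$), the marginal drift realizes exactly the true conditional transition $p(x_{k+1}\mid x_{[k-h+1,k]})$. Chaining these transitions from $k=1$ to $k=T-1$ and applying the tower property of conditional probability, the joint law of the process integrated against $u_t(\cdot|c)$ coincides with $\pi^\star(x_{1:T})$, which is the desired equality $\Pi(u)^\star = \Pi^\star(x_{1:T})$.

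The main obstacle is the bookkeeping at interval boundaries: verifying that the endpoint law at time $k+1$ produced by the bridge on $(k,k+1)$ agrees with the initial marginal used when the conditioning window shifts to include $x_{k+1}$. This is where the assumption $h\ge 1$ is essential — without it, the conditional distribution at $t=k+1$ would depend on $x_{k+1}$ itself, breaking the recursion. The regularity hypotheses on $u_t(\cdot|z)$, $p_t$, and $q$ are needed to justify interchanging differentiation and expectation when marginalizing the FPE, and to guarantee existence/uniqueness of the flow $\phi_t(u)$ so that the induced coupling is well defined.
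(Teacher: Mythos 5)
Your proof essentially follows the paper's logic — vanishing $L^2$ gap $\Rightarrow$ a.s.\ equality of vector fields $\Rightarrow$ equal flows $\Rightarrow$ equal couplings — but the middle of the argument is muddled. Your step (2), invoking the CFM/FPE marginalization lemma to show that $u_t(\cdot|c)$ generates the conditional marginal $p_t(x_t|c)$, is a detour: marginal agreement is exactly what plain conditional flow matching already guarantees and is precisely the thing this proposition is trying to go \emph{beyond}. Getting the marginals right does not, by itself, pin down the transition kernels or the coupling. What actually yields the transition kernels is the a.s.\ equality from your step (1): once $u_t(x_t|z,c)=u_t(x_t|c)$ almost surely under the joint law, the push-forward flows $\phi(x_0,c)=\int_0^1 u_t(x_t|c)\,dt$ and $\phi(x_0,c,z)=\int_0^1 u_t(x_t|z,c)\,dt$ coincide pointwise, and that alone determines the coupling. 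The paper's proof goes directly there; your step (2) neither advances nor is used by your step (3), which in fact relies on the step (1) equality without saying so.

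A second, more structural slip: you fold the specific choice $c = x_{[t-h,t-1]}$ with $h\ge 1$ into the proof of the proposition and argue that the history window is ``essential.'' But the proposition is stated for an arbitrary conditional $c$ under the abstract hypothesis $\mathbb{E}\,\|u_t(x_t|z,c)-u_t(x_t|c)\|_2^2=0$. It is the later assumptions \textbf{(A1)}--\textbf{(A3)} that instantiate $c$ (as the initial point, a history window, or a trajectory identifier) and argue why that hypothesis is satisfied. Your interval-boundary bookkeeping and the $h\ge 1$ claim properly belong to the verification that \textbf{(A2)} implies the proposition's hypothesis, not inside the proposition's own proof. The cleaner structure, which the paper follows, is: prove the abstract proposition once, then separately show how each of \textbf{(A1)}--\textbf{(A3)} forces the expectation to vanish.
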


Where $\Pi(u)^\star$ represents the coupling of a model which attains minimal loss according to \eqref{eq:matching} and $\Pi^\star(x_{1:T})$ is the coupling of the data distribution. Intuitively, as long as no two paths cross given conditionals $c$, then the coupling is preserved. In prior work $c = \emptyset$, and the coupling is only preserved in special cases such as \eqref{eq:eot}.

We next enumerate three assumptions under which the coupling is guaranteed to be preserved at the optima. We note that these are 

\begin{enumerate}
    \item[\textbf{(A1)}] When $c = x_0$ and there exists $T:\mathcal{X} \to \mathcal{X}$ such that $T(x_0) = x_1 \text{ iff } \Pi^\star(x_0, x_1)$. We note that this is equivalent to asserting the existence of a Monge map $T^\star$ for the coupling $\Pi^\star$.
    \item[\textbf{(A2)}] There exist no two trajectories $x^i$, $x^j$ such that $x^i_t = x^j_t$ for $h$ consecutive observations and $g = 0$.
    \item[\textbf{(A3)}] Trajectories are associated with unique conditional vectors $c$ independent of $t$. 
\end{enumerate}
Even in cases when $\textbf{(A1)-(A3)}$ may not hold exactly, TFM is a useful model and can often still learn useful models of the data. In some sense uniqueness up to some history length is enough as it shows TFM is as powerful as discrete-time autoregressive models. Proofs and further examples are available in \S\ref{appendix:proof_TFM}.

\subsection{Target prediction reparameterization}\label{sec:tfm:target}
While flow matching generally predicts the flow, there is a target predicting equivalent namely
given $v_\theta(t, x) := \frac{\hat{x}^{\tc}_\theta(t, x_t) - x_t}{{\tc} - t}$ and $u_t(x | z) := \frac{x^{\tc} - x_t}{{\tc} - t} $ which is equivalent to $x_1 - x_0$ when $x_t : t x_1 + (1 - t) x_0$ then it is easy to show that the target predicting loss is equivalent to a time-weighted flow-matching loss. Specifically let the target predicting loss be
\begin{equation}
    \mathcal{L}_\text{target}(\theta) = \E_{t, q(z), p_t(x | z)} \| \hat{x}^{\tc}_\theta(t, x) - x^{\tc}\|^2
\end{equation}
then it is easy to show that
\begin{proposition}
There exists a scaling function $c(t): \mathbb{R}_+ \to \mathbb{R}$ such that
    $\mathcal{L}_\text{target}(\theta) = c(t) \mathcal{L}_\text{match}(\theta)$.
\end{proposition}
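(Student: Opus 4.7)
The plan is a direct substitution using the reparameterization provided. Writing
$$
v_\theta(t, x_t) - u_t(x_t\mid z) \;=\; \frac{\hat{x}^{\tc}_\theta(t, x_t) - x^{\tc}}{\tc - t},
$$
which follows immediately from the common-denominator definitions $v_\theta(t,x_t) = (\hat{x}^{\tc}_\theta(t,x_t)-x_t)/(\tc - t)$ and $u_t(x_t\mid z) = (x^{\tc} - x_t)/(\tc - t)$, already does most of the work. Squaring both sides factors out a clean scalar $(\tc - t)^{-2}$ from the matching-loss integrand.

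Concretely, I would first declare the candidate scaling $c(t) := (\tc - t)^{-2}$. Then I would take the drift part of $\mathcal{L}_\text{match}$ from \eqref{eq:matching}, substitute the display above, and move the deterministic prefactor $c(t)$ outside the norm to obtain
$$
\mathcal{L}_\text{match}(\theta) \;=\; \E_{t,\, q(z),\, p_t(x\mid z)}\!\left[ c(t)\, \bigl\| \hat{x}^{\tc}_\theta(t, x_t) - x^{\tc} \bigr\|^2 \right].
$$
Comparing this term-by-term with $\mathcal{L}_\text{target}(\theta) = \E[\|\hat{x}^{\tc}_\theta - x^{\tc}\|^2]$ gives the claimed identity at the level of integrands. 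Equivalently, one can absorb $c(t)$ into a reweighted sampling density for $t$ to recover a literal equation between scalar losses, which justifies the informal $\mathcal{L}_\text{target} = c(t)\,\mathcal{L}_\text{match}$ phrasing.

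The main obstacle is interpretive rather than computational: the statement mixes a pointwise function $c(t)$ with expected losses, so the cleanest reading is as an identity between integrands, or as a statement of proportional risks under time-reweighting. A minor care-point is that $c(t) = (\tc - t)^{-2}$ diverges as $t \uparrow \tc$, but under the sampling scheme in Alg.~\ref{alg:tfm}, where $t$ is drawn uniformly on $(0,1)$ within each integer cell, the singular set has Lebesgue measure zero, so the identity holds almost everywhere, which is all that is needed for training. Beyond this bookkeeping, the proof is one line of algebra.
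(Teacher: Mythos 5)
Your proof is correct and essentially identical to the paper's: both substitute the definitions $v_\theta = (\hat{x}^{\tc}_\theta - x_t)/(\tc - t)$ and $u_t = (x^{\tc} - x_t)/(\tc - t)$, cancel the common $x_t$ terms over the shared denominator, and pull the deterministic factor $(\tc - t)^{-2}$ out of the norm. The only nit is a labeling one: since the proposition is phrased as $\mathcal{L}_\text{target} = c(t)\,\mathcal{L}_\text{match}$, the scaling it has in mind is $c(t) = (\tc - t)^2$, the reciprocal of what you named, though you write the actual identity in the correct direction and explicitly flag the integrand-level reading, which is more careful than the paper itself.
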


\subsection{Irregularly sampled trajectories}\label{sec:tfm:uneven}
We next consider irregularly sampled time series of the form $x^i := \left ((x^i_1, t_1^i), (x^i_2, t_2^i), \ldots, (x^i_T, t_T^i)\right )$ with $t_1^i < t_2^i < \cdots < t_T^i$ with $t_{next}$ denoting the next timepoint observed after time $t$. In this case, when combined with the target predicting reparameterization in \S\ref{sec:tfm:target}, we can predict the time till next observation. We therefore parameterize an auxiliary model $h_\theta(t, x_t): [0, T] \times \mathbb{R}^d \to \mathbb [0, T]$ which predicts the next observation time. This is useful numerically, but also, perhaps more importantly, is useful in a clinical setting, where the spacing between measurements can be as informative as the measurements themselves~\citep{allam2021analyzing}. $h_\theta$ is trained to predict the time till the next observation with the \textit{time predictive loss}:
\begin{equation}
    \mathcal{L}_\text{tp}(\theta) = \sum_{t \in \mathcal{T}^i} \| h_\theta(t, x_t) - (t_\text{next} - t)\|_2^2
\end{equation}
where $t_\text{next}$ is the time of the next measurement. This can be used in conjunction with the $x_\text{next}$ predictor to calculate the flow at time $t$ as
\begin{equation}
    v_\theta(t, x_t) := \frac{\hat{x}_\theta^1(t, x_t) - x_t}{h_\theta(t, x_t) - t}
\end{equation}
which can be used for inference on new trajectories.

\subsection{Uncertainty prediction}\label{sec:tfm:uncertainty}

Finally, we consider uncertainty prediction. till now we have defined conditional probability paths using a fixed noise parameter $\sigma$. However, this does not have to be fixed. Instead, we consider a \textit{learned} $\sigma_\theta(t, x_t)$ which can be learned iteratively with the loss:
\begin{equation}
    \mathcal{L}_\text{uncertainty}(\theta, x) = \sum_{t \in \mathcal{T}} \left \| \sigma_\theta(t, x_t) - \| \hat{x}_\theta(t, x_t) - x_\text{next} \|_2^2 \right \|_2^2
\end{equation}
which learns to predict the error in the estimate of $x_t$. This loss can be interpreted as training an epistemic uncertainty predictor which is similar to that proposed in direct epistemic uncertainty prediction (DEUP)~\citep{lahlou2023deup}.

\section{Experimental Results}
In this section we empirically evaluate the performance of the trajectory flow matching objective in terms of time series modeling error, but also uncertainty quantification. We also evaluate a variety of simulation-based and simulation-free methods including both stochastic and deterministic methods. Stochastic methods are in general more difficult to fit, but can be used to better model uncertainty and variance. Further experimental details can be found in \S\ref{appendix:data}. Experiments were run on a computing cluster with a heterogenous cluster of NVIDIA RTX8000, V100, A40, and A100 GPUs for approximately 24,000 GPU hours. Individual training runs require approximately one gpu day.

\begin{figure}[t]
    \centering
    \includegraphics[width=\columnwidth]{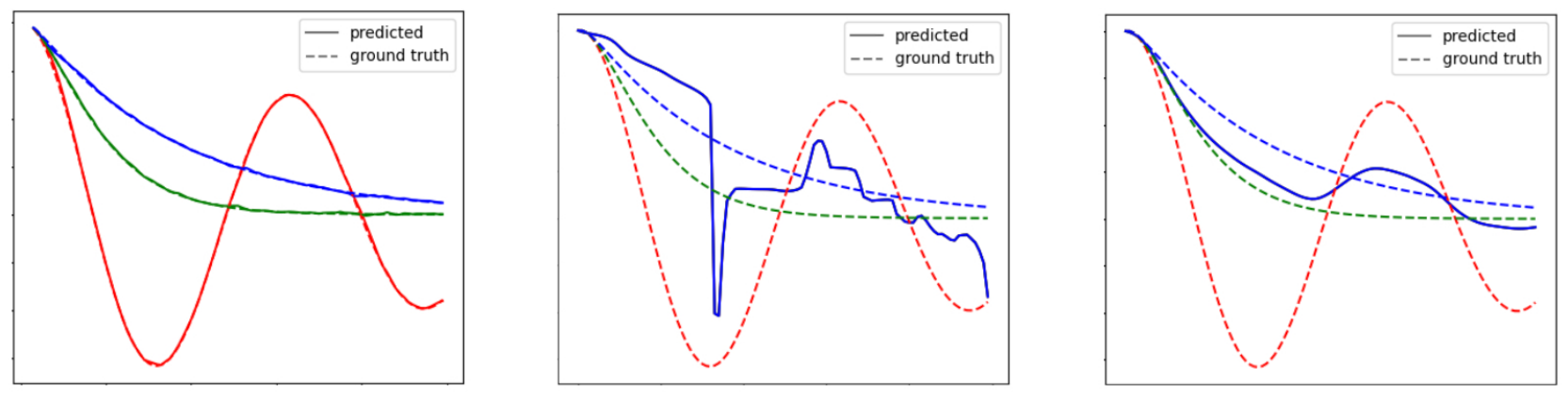}
    \vspace{-10pt}
    \caption{1D harmonic oscillator overfitting experiment results. \textbf{Left:} \algnameode{} (ours) with memory = 3. \textbf{Middle:} \algnameode{} (ours) without memory. \textbf{Right:} Aligned FM~\citep{liu_sb_2023,somnath_aligned_2023}.}
    \label{fig:crossing_oscillation_exp}
\end{figure}
\paragraph{Baselines} In addition to different ablations of trajectory flow matching, we also evaluate NeuralODE~\citep{neuralode}, NeuralSDE~\citep{li_scalable_2020,kidger2021efficient,kidger2022neural}, Latent NeuralODE~\citep{latentode}, and an aligned flow matching method (Aligned FM)~\citep{liu_sb_2023,somnath_aligned_2023} where the couplings are sampled according to the ground truth coupling during training.

\paragraph{Metrics} We primarily make use of two metrics. The average mean-squared-error (Mean MSE) over left out time series to measure the time series modeling error defined as
\begin{equation}
    \text{MSE}(\hat{x}, x) = \frac{1}{T-1}\sum_{t \in [2, T]} \|\hat{x}_t - x_t\|_2^2, 
\end{equation}
where $\hat{x}$ and $x$ are the predicted and true trajectories respectively. We also use the \textit{maximum mean discrepancy} with a radial basis function kernel (RBF MMD) which measures how well the distribution over next observation is modelled by comparing the predicted distribution to the distribution over next states in the ground truth trajectory. Specifically we compute:
\begin{equation}
    \text{RBF-MMD}(\theta, \hat{x}, x) := \frac{1}{T-1}\sum_{t \in [2,T]} \text{MMD}(\hat{\Delta}_{t}, \Delta_t)
\end{equation}
where $\hat{\Delta}_t = \hat{x}_t - x_{t-1}$, $\Delta_t = x_t - x_{t-1}$, and $\hat{x}_t := \int_{s={t-1}}^{t} f_\theta(s, x_s) ds + g_\theta(t, x_s) d W_s$ is a set of samples from the model prediction at time $t$. 

\subsection{Exploring coupling preservation with 1D harmonic oscillators}
We begin by evaluating how trajectory flow matching performs in a simple one dimensional setting of harmonic oscillators. We show that the canonical conditional flow and bridge matching~\citep{liu2022learning,liu_flow_2023,albergo_building_2023}, specifically aligned approaches~\citep{somnath_aligned_2023,liu_sb_2023} are unable to preserve the coupling even in a simple one dimensional setting. However, augmented with our trajectory flow matching approach, and specifically using \textbf{(A2)}, which includes information on previous observations, the model is able to fit the harmonic oscillator dataset well.

The harmonic oscillator dataset consists of one-dimensional oscillatory trajectories from a damped harmonic oscillator, with each trajectory distinguished by a unique damping coefficient $c$. Specifically we sample trajectories $x$ from: %
\begin{equation}
    x_i = x_{i-1} + v_{i-1} (t_i - t_{i-1}); \quad x_0 = 1
\end{equation}
where $v$ is the velocity of the oscillator updated by
\begin{equation}
    v_i = v_{i-1} + \left(-\frac{c}{m} v_{i-1} - \frac{k}{m} x_{i-1}\right) (t_i - t_{i-1}); \quad v_0 = 0
\end{equation}
with $t_i = 0.1 \cdot i$ for $i=0, 1, 2, \ldots, 99$, spring constant $k=1$, and mass $m=1$. 

As $c$ increases, the trajectories evolve from underdamped scenarios with prolonged oscillations to critically and overdamped states where the oscillator quickly stabilizes. This leads to intersecting trajectories due to frequency and phase differences, despite their shared starting point. %
We perform overfitting experiments on three trajectories generated by varying $c$.

As shown in Figure \ref{fig:crossing_oscillation_exp}, models without history information are unable to distinguish between the three crossing trajectories that share the same starting point, resulting in overlapping predictions. In contrast, \algnameode{} that incorporates three previous observations is able to fit the crossing trajectories with high accuracy, with the predicted trajectories almost completely overlapping the ground truth. This is because the dataset with satisfies \textbf{(A2)} with $h=4$ (\algnameode{}), but not $h=0$ (\algnameode{} no memory and Aligned FM).

\subsection{Experiments on clinical datasets}

Next we compared the performance of \algname{} and \algnameode{} with the current SDE and ODE baselines, respectively, for modeling real-world patient trajectories formed with heart rate and mean arterial blood pressure measurements within the first 24 hours of admission across four different datasets. These are clinical measurements that are taken most frequently and used to evaluate the hemodynamic status of patients, a key indicator of disease severity. Additionally, we evaluated our models against flow matching on these datasets, each with distinct characteristics, to assess their ability to generalize across different distributions. A full description of the datasets are available in Appendix~\ref{appendix: clinical data} with the publicly available datasets used under The PhysioNet Credentialed Health Data License
Version 1.5.0 and the EHR dataset with local institutional IRB approval: 
\begin{itemize}
    \item \textbf{ICU Sepsis:} a subset of the eICU Collaborative Research Database v2.0 \citep{eICU} of patients admitted with sepsis as the primary diagnosis.
    \item \textbf{ICU Cardiac Arrest:} a subset of the eICU Collaborative Research Database v2.0 \citep{eICU} of patients at risk for cardiac arrest.
    \item \textbf{ICU GIB:} a subset of the Medical Information Mart for Intensive Care III \citep{johnson2016mimic} of patients with gastrointestinal bleeding as the primary diagnosis.
    \item \textbf{ED GIB:} patients presenting with signs and symptoms of acute gastrointestinal bleeding to the emergency department of a large tertiary care academic health system.
\end{itemize}

\subsubsection{Prediction accuracy and precision: TFM and TFM-ODE}

\paragraph{\algnameode{} yields more accurate trajectory prediction} Across the four datasets \algnameode{} outperformed the baseline models by $15\%$ to $20\%$, as seen in table \ref{tab:mse_all}. We noticed that \algname{} has a similar performance as \algnameode{}. In one case \algname{} outperformed the non-stochastic \algnameode{}, as seen in the ICU GIB dataset. For ICU sepsis, the performance improvement from the baseline is the most significant, around $83\%$. This coincides with the ICU sepsis dataset having the most amount of measurement per trajectory. The improvement is seen in both \algname{} and \algnameode{}, possibly indicating they are able to learn better given more data, resulting in a more precise flow. Not formally measured, we noted that given the same time constraint, FM based models were significantly faster and often finished training before the time limit. 

\begin{table}[h!]
    \centering
        \caption{Mean $\pm$ Std. deviation MSE ($\times 10^{-3}$) by models and datasets. Split into deterministic (top) and stochastic models (bottom). Top performing model for each setting and dataset in \textbf{bold}.}
            \resizebox{1\textwidth}{!}{
    \begin{tabular}{lrrrr}
    \toprule
     &  ICU Sepsis & ICU Cardiac Arrest &ICU GIB  & ED GIB\\
    \midrule
    NeuralODE& 4.776 $\pm$ 0.000&  6.153 $\pm$ 0.000&3.170 $\pm$ 0.000 & 10.859 $\pm$ 0.000\\ %
    FM baseline ODE & 4.671 $\pm$ 0.791 & 10.207$\pm$ 1.076 &118.439 $\pm$ 17.947  & 11.923 $\pm$ 1.123\\ %
    LatentODE RNN &  61.806 $\pm$ 46.573&  386.190 $\pm$ 558.140 &422.886 $\pm$ 431.954&980.228$\pm$1032.393\\
    \algnameode{} (ours)  & \textbf{0.793 $\pm$ 0.017}&  2.762 $\pm$ 0.021 &2.673 $\pm$ 0.069&\textbf{8.245  $\pm$ 0.495}\\ %
    \midrule
    NeuralSDE  &  4.747 $\pm$ 0.000 &  3.250 $\pm$ 0.024 & 3.186 $\pm$ 0.000 & 10.850 $\pm$ 0.043\\
    \algname{} (ours)& 0.796 $\pm$ 0.026 & \textbf{2.755 $\pm$ 0.015}  &\textbf{2.596 $\pm$ 0.079}& 8.613 $\pm$ 0.260\\ %
    \bottomrule
    \end{tabular}
}
    \label{tab:mse_all}
\end{table}

\paragraph{TFM yields better uncertainty prediction}
Though \algnameode{} had lower test MSE for half of the times, \algname{} yielded better uncertainty prediction overall, as seen in table \ref{tab:uncertainty_test_table}. Notably, \algname{} also had less variance in the uncertainty prediction than \algnameode{}. A plausible explanation in this case is a sacrifice in bias that subsequently decreases the variance for the stochastic implementation, reflecting the bias-variance trade off. Sampled graphs of \algname{} can be seen in figure \ref{fig:pred_traj}. It is notable that the model is able to detect the measurement uncertainty at certain timepoints, matching the increase in amplitude of oscillation in patient trajectories.
\begin{table}[h!]
    \centering
    \caption{Uncertainty test MSE loss for \algnameode{} and \algname{} with two different ICU datasets.}
    \label{tab:uncertainty_test_table}
    \begin{tabular}{lrrr}
        \toprule
        & ICU sepsis & ICU Cardiac Arrest & ICU GIB  \\ \midrule
        \algnameode{} & 1.039 $\pm$ 0.1645 & 0.970 $\pm$ 0.1426& 0.9843 $\pm$ 0.2233  \\ 
        \algname{} & \textbf{0.724 $\pm$ 0.0072} & \textbf{0.636 $\pm$ 0.0024}&\textbf{0.605 $\pm$ 0.0137}  \\ 
        
        \bottomrule
    \end{tabular}
    
\end{table}
\begin{figure}[h!]
    \centering
    \includegraphics[width=\textwidth]{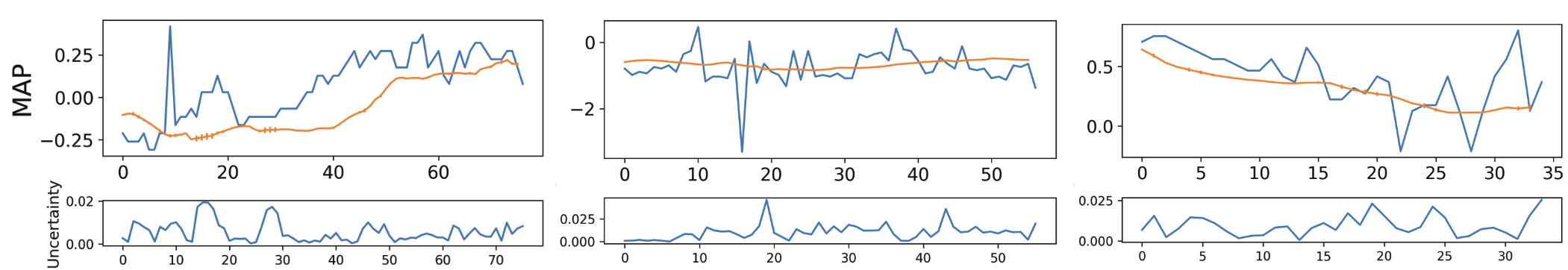}
    \caption{Three samples from predicted trajectory and uncertainty on ICU GIB test set. \textbf{Top}: Predicted (orange) and the ground truth (blue) mean arterial pressure (MAP). \textbf{Bottom}: The absolute value of the uncertainty predicted by \algname{}.}
    \label{fig:pred_traj}
\end{figure}

\subsubsection{Trajectory Variance Distribution Comparison}

\paragraph{\algname{} trajectories accurately match the noise distribution in the data} 
\algname{} is able to match the noise distribution in addition to the overall trajectory shape, which is useful in settings where data has high stochasticity. We compared our models to NeuralODE and NeuralSDE in matching the variance in neighboring data points, seen in table \ref{tab:mmd}. We verify that between the baseline NeuralSDE and NeuralODE, NeuralSDE has a lower MMD and is better able to match data points. We find in ICU GIB and ED GIB datasets, \algname{} outperforms both in matching the variance in data. Notably, the performance pattern is reversed for the MMD metrics and mean MSE metrics with respect to \algname{} and \algnameode{} where better MSE leads to worse MMD and vice versa. As such, this further confirms the bias-variance trade-off for both \algname{} and \algnameode{} implementation.

\begin{table}[h!]
    \centering
        \caption{Data variance MMD for by models and datasets. Split into deterministic models (top) and stochastic models (bottom). Top performing model for each setting and dataset in \textbf{bold}.}
    \label{tab:mmd}
\begin{tabular}{lrrrr}
\toprule
 &  ICU Sepsis &  ICU Cardiac Arrest &ICU GIB & ED GIB\\
\midrule
NeuralODE &  1.988 $\pm$ 0.000 &  2.246 $\pm$ 0.000 &2.090 $\pm$ 0.000&2.192 $\pm$ 0.000\\
\algnameode{} (ours)&  \textbf{1.172 $\pm$ 0.017}&  1.295 $\pm$ 0.006&1.087 $\pm$ 0.02& 1.063 $\pm$ 0.031\\
\midrule
NeuralSDE &  1.212 $\pm$ 0.000&  3.261 $\pm$ 0.020 &1.332 $\pm$ 0.000& 1.465 $\pm$ 0.122\\
\algname{} (ours)  & 1.199 $\pm$ 0.006 &  \textbf{0.993 $\pm$ 0.003} &\textbf{0.844 $\pm$ 0.013}& \textbf{0.717 $\pm$ 0.016}\\
\bottomrule
\end{tabular}

\end{table}

\subsubsection{Ablation Study}

We performed ablation studies on \algname{} and \algnameode{} to attribute importance of various model components contributing to the performance, as seen in table \ref{tab:model_ablation}. We examined three aspects of the model, two of which were part of our main contributions: uncertainty prediction and memory. We also ablate the model hidden dimension width to infer its potential in scaling effect.
\begin{table}[h!]
    \centering
    \caption{Mean MSE ($\times 10^{-3}$) by ablated versions of \algname{}, \algnameode{}, and datasets.}
    \label{tab:model_ablation}
    \resizebox{1.0\textwidth}{!}{
    \begin{tabular}{p{0.12\linewidth} |p{0.104\linewidth}p{0.07\linewidth}p{0.07\linewidth}|p{0.15\linewidth}p{0.15\linewidth}p{0.15\linewidth}p{0.15\linewidth}}
    \toprule
    &     Uncertainty Prediction &Memory &Hidden Size&ICU Sepsis &ICU Cardiac Arrest & ICU GIB & ED GIB \\
    \midrule
    \algnameode &    \checkmark &\checkmark &256&\textbf{0.793 $\pm$ 0.017} &2.762 $\pm$ 0.017& 2.673 $\pm$ 0.069 & 8.245  $\pm$ 0.495 \\
    &    &\checkmark &256&1.170 $\pm$ 0.014 &2.759 $\pm$ 0.015  & 3.097 $\pm$ 0.054 &  8.659 $\pm$ 0.429 \\
    &    &&256&1.555 $\pm$ 0.122 & 3.242 $\pm$ 0.050 & 2.981 $\pm$ 0.161 & \textbf{6.381 $\pm$ 0.451} \\
    &    &&64&1.936 $\pm$ 0.262 &3.244$\pm$ 0.025  & 4.003 $\pm$ 0.347 & 11.253$\pm$ 4.597 \\
    \midrule
    \algname &    \checkmark &\checkmark &256&0.796 $\pm$ 0.026 & \textbf{2.596 $\pm$ 0.079}& \textbf{2.762 $\pm$ 0.021} & 8.613 $\pm$ 0.260 \\
    &    &\checkmark &256&0.816 $\pm$ 0.031 &2.778 $\pm$ 0.021 & 2.754 $\pm$ 0.095 & 8.600 $\pm$ 0.389 \\
    &    &&64&1.965 $\pm$ 0.289&3.271 $\pm$ 0.031 & 4.037 $\pm$ 0.314 & 7.549 $\pm$ 0.737 \\
    \bottomrule
    \end{tabular}
    }
    
\end{table}
\paragraph{\algname{} and \algnameode{} performance scales with model size} In contrast to Neural DE based models, \algname{} and \algnameode{} exhibit scaling effect, in which the model performance becomes better with a larger hidden dimension. This has been observed in previous flow matching models in image generation \citep{tong_minibatch}. This may pave the way for further improvements from larger models.

\paragraph{Uncertainty improves performance of trajectory prediction}
For \algname{} and \algnameode{}, the flow network used to learn the uncertainty $\sigma_{x_t}$ is separate from the flow network learning $x_t$. The loss function of the network learning $x_t$ is independent of uncertainty flow network. Therefore, it was unexpected that taking away the uncertainty prediction would result in increased MSE test loss for learning $x_t$. This implies further a process in the synergistic effects between $x_t$ flow and $\sigma_{x_t}$ flow.

\paragraph{Trajectory memory may improve performance in high frequency measurement settings} We conditioned the model based on a sliding window of trajectory history to disentangle data points that otherwise look indistinguishable to FM models. This improved the interpolation performance in the ICU Sepsis and ICU GIB dataset. Notably, this modification did not improve performance the ED GIB dataset, which could be due to shorter trajectories for patients and lower measurement frequency in the defined time period. This may also be explained by the  decreased severity of disease in the ED compared to the ICU. Adding memory as a condition may be more suitable for patients whose clinical trajectories have a higher frequency of measurements.

\section{Related Work}

Continuous-time neural network architectures have outperformed traditional RNN methods in modeling irregularly sampled clinical time series to optimize interpolation and extrapolation. Neural ODE with latent representations of trajectories \citep{latentode} outperformed RNN-based approaches \citep{pmlr-v56-Lipton16, Che2018, cao2020, Rajkomar2018} for interpolation while providing explicit uncertainty estimates about latent states. More recently, Neural SDEs appear to outperform LSTM \citep{Hochreiter1997}, Neural ODE \citep{neuralode,debrouwer2019,dupont2019,lechner2020}, and attention-based \citep{shukla2021,Lee2022} approaches in interpolation performance while natively handling uncertainty using drift and diffusion terms \citep{oh2024stable}. 

Discrete-time approaches offer an alternative to our continuous-time model model transformers utilize a discrete-time representation with a sequential processing~\citep{gao2024,nie2022,woo2024,ansari2024,dong2024,garza2023,das2023,liu2024,kuvshinova2024} models for traditional time series modeling. Adaptations to the baseline transformer includes structuring observations into text with finetuning \citep{zhang2023,zhou2023}, without finetuning \citep{Xue2022, Gruver2023}, or using autoregressive model vision transformers to model unevenly spaced time series data by converting time series into images \citep{li2023}.

Continuous-time systems are of great interest for learning causal representations using assumptions by using observations to directly modify the system state \citep{debrouwer2022,jia2019}. Variations include intervention modeling with separate ODEs for interventions and outcome processes \citep{gwak2020}, using liquid time-constant networks \citep{Hassani2020,Vorbach2021}, or modeling treatment effects with either one \citep{bellot2021} or multiple interventions \citep{seedat2022}. The importance of accounting for external interventions is a particular challenge in clinical data, where external interventions (change in environment due to treatment decisions or clinical context such as ED or ICU) are common in clinical data trajectories. 

\section{Conclusion}
In this work we present Trajectory Flow Matching, a simulation-free training algorithm for neural differential equation models. We show when trajectory flow matching is valid theoretically, then demonstrate its usefulness empirically in a clinical setting. The ability to model the underlying continuous physiologic processes during critical illness using irregular, sparsely sampled, and noisy data has the potential for broad impacts in care settings such as the emergency department or ICU. These models could be used to improve clinical decision making, inform monitoring strategies, and optimize resource allocation by identifying which patients are likely to deteriorate or recover. These use cases will require thorough prospective validation and calibration for specific clinical outcomes, for example using the likelihood of a patient crossing a specific heart rate or blood pressure threshold for decisions on level of care (ICU versus inpatient floors) or specific interventions such as transfusions. In these applications, it will be important to assess and control for bias that may be present due to which patient subpopulations are present in training data. 

\paragraph{Limitations}
Limitations of the method includes the selective utility of integrating memory in clinical settings with high measurement frequency and no current capacity for estimating causal representations, though this will be an important future research direction. Potential harms include the following: erroneous predictions that either results in delayed care or overutilization of the health system. Accurate trajectory predictions have the potential to inform clinical decision-making regarding the appropriate level of care, leading to more timely and appropriate interventions. 

\paragraph{Future work}
We hope to extend our method to cover other types of time series that have periodicity in the components, potentially incorporating Fourier transform \citep{le_fourier_2020} and Physics-Inspired Neural Networks (PINN). Since interpretability is an important factor for clinical reliability, we are developing methods to further elucidate key components affecting the prediction. As well, we hope to incorporate functional flow matching for fully continuous setting \citep{functionalfm}.

\section{Broader Impact}
Our work extends flow matching into the domain of time series modeling, demonstrating a specific instance of clinical time series prediction. In contrast to the large transformer-based models, our method has fewer in parameters and less training time needed. Notably, it scales well with parameters. As well, our parameterization on Stochastic Differential Equations (SDE) allow faster training time than traditional SDE integration. 

Accurate timeseries modeling in healthcare has the potential for significant benefits, but also introduces risks. Benefits that could be derived from more accurate prediction of clinical courses include improved treatment decisions, resource allocation, as well as more informative discussions of prognosis with patients or family members. Risks may come from inaccuracies in predictions which could lead to harms by biasing decision making of clinical teams. In the general case of false negative prediction (prediction of trajectories with falsely favorable outcomes) this may lead to undertreatment and in the case of false positive prediction (prediction of trajectories with incorrect detrimental outcomes) or overtreating patients. These inaccuracies may also propagate biases in training data. 

To move towards broad impact in the clinical domain, this work will require validation and bias estimates. Furthermore, models deployed in domains with high-stakes prediction require interpretability, which can help identify biases, miscalibration, discordance with domain knowledge, as well as build trust with teams using predictions from the model. At this time, flow-based methods have limited tools for interpretability, and we recognize this as a gap in need of future work.

\section*{Acknowledgements}

The authors would like to thank Mathieu Blanchette for useful comments on early versions of this manuscript.  We are also grateful to the anonymous reviewers for suggesting numerous improvements.

The authors acknowledge funding from the National Institutes of Health, UNIQUE, CIFAR, NSERC, Intel, and Samsung. The research was enabled in part by computational resources provided by the Digital Research Alliance of Canada (\url{https://alliancecan.ca}), Mila (\url{https://mila.quebec}), Yale School of Medicine and NVIDIA.

\bibliographystyle{abbrvnat} %
\bibliography{bibliography}
\clearpage
\newpage

\appendix

\section{Proof of theorems}

We first prove a Lemma which shows TFM learns valid flows between distributions with the target prediction reparameterization trick.
\begin{lemma} 
\label{appendix:proof_TFM}
If $p_t(x) > 0$, $\delta_{data}$ is Lipschitz continuous for all $x \in \mathbb{R}^d$ and $t \in [0,1]$, $\mathcal{L}_{FM}$ and $\mathcal{L}_{TFM}$ are equal, 
$$\nabla_\theta \mathcal{L}_{FM}(\theta) = \nabla_\theta \mathcal{L}_{TFM}(\theta)$$
\end{lemma}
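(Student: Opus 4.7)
The plan is to adapt the classical flow matching identity of Lipman et al.\ to the trajectory setting: show that $\LFM(\theta)$ and $\LTFM(\theta)$ differ by an additive term that is independent of $\theta$, so their $\theta$-gradients coincide. The key ingredient is the representation of the marginal vector field as a $p_t$-weighted mixture of conditional vector fields,
\begin{equation*}
u_t(x) = \frac{\E_{q(z)}[u_t(x \mid z)\, p_t(x \mid z)]}{p_t(x)},
\end{equation*}
which is well defined precisely because the hypothesis $p_t(x) > 0$ rules out division by zero and, together with the Lipschitz continuity assumption on the data-conditional density, supplies the dominating integrable majorants that will be needed later for Fubini and for differentiation under the integral.

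First I would expand the squared norms in both losses as $\|v_\theta\|^2 - 2\langle v_\theta, u_t(\cdot)\rangle + \|u_t(\cdot)\|^2$, once against the marginal target $u_t(x)$ and once against the trajectory-conditional target $u_t(x \mid z)$ from \eqref{eq:tfm:flow}. The pure target terms $\|u_t(x)\|^2$ and $\|u_t(x \mid z)\|^2$ do not depend on $\theta$ and drop out under $\nabla_\theta$. The quadratic-in-$v_\theta$ terms match because $p_t(x) = \E_{q(z)}[p_t(x \mid z)]$ by construction of the trajectory mixture in \algname{}, so $\E_{p_t(x)} \|v_\theta\|^2 = \E_{q(z),\, p_t(x \mid z)} \|v_\theta\|^2$. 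The only substantive identity is the cross term, which after substituting the mixture formula for $u_t(x)$, cancelling the factor $p_t(x)$, and swapping integrals collapses to
\begin{equation*}
\E_{p_t(x)} \langle v_\theta, u_t(x)\rangle = \E_{q(z),\, p_t(x \mid z)} \langle v_\theta, u_t(x \mid z)\rangle.
\end{equation*}
Combined with an outer expectation over $t$, these three observations give $\LFM(\theta) - \LTFM(\theta) = C$ with $C$ independent of $\theta$, from which the gradient identity follows.

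The hard part will be discharging the analytic side-conditions that license the two rearrangements above: Fubini on the joint integral over $x$ and $z$, and the exchange of $\nabla_\theta$ with the expectation. This is precisely where the hypotheses are spent — Lipschitz continuity of the data-conditional density, together with $p_t(x) > 0$, supplies a locally integrable majorant for the integrand $v_\theta(t,x)\, u_t(x \mid z)\, p_t(x \mid z)$, and assuming $v_\theta$ is continuously differentiable in $\theta$ with locally bounded parameter-gradient then permits differentiation under the expectation by dominated convergence. I would conclude by checking that the explicit Gaussian $p_t(x \mid z)$ in \eqref{eq:tfm:pt} and the linear conditional flow in \eqref{eq:tfm:flow} fit this framework; this is routine thanks to Gaussian tails and the affine structure of $u_t(x \mid z)$, and is the only place where the passage from standard CFM (with $z = (x_0,x_1)$) to TFM (with $z$ a full trajectory) changes the bookkeeping rather than the underlying identity.
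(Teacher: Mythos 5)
Your proposal is correct and takes essentially the same approach as the paper: both run the Lipman-style marginal-to-conditional argument --- mixture representation $u_t(x)=\E_{q(z)}[u_t(x\mid z)\,p_t(x\mid z)]/p_t(x)$, expansion of the squared loss, dropping of the $\theta$-independent target term, and exchange of integrals on the quadratic and cross terms, with $p_t>0$ and Lipschitz continuity invoked to license the rearrangements. The only surface difference is that the paper carries out the bookkeeping in the target-predictive parameterization $(\hat{x}^{\tc}_\theta, x^{\tc})$ with the $({\tc}-t)^{-2}$ weight, whereas you argue directly in $(v_\theta, u_t)$ --- an equivalence the paper itself establishes.
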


\begin{proof}
This proof is a simple extension of \cite{lipman2022flow,tong_minibatch} which proved  $\mathcal{L}_{CFM}$ and $\mathcal{L}_{FM}$ are equal under similar constraint.

Given $\delta_{data} = t_1 - t_0$, we have $u_t(x) = \frac{x_1-x_0}{\delta_{data}}$ where $t_0$ is the previous time in the time series, and $t_1$ is the current time for inference. For the time series data, we are assuming it to be Lipschitz continuous there exist \( L \geq 0 \) such that for all \( x, y \in \mathbb{R}^n \), $|f(x) - f(y)| \leq L \| x - y \|$.

\begin{align}
\nabla_\theta \E_{p_t(x)} \|v_\theta (t,x) - u_t(x)\|^2 &=
\E_{t, q(z), p_t(x | z)} \frac{1}{(1-t)^2}\| \hat{x}^1_\theta(t, x) - x_1\|^2 \\
&= \nabla_\theta \E_{t, q(z), p_t(x | z)} \frac{1}{(1-t)^2} \left (\ \| \hat{x}^1_\theta(t, x)\|^2 - 2 \left \langle \hat{x}^1_\theta(t, x),x_1 \right \rangle + x_1^2 \right ) \\
&= \nabla_\theta \E_{t, q(z), p_t(x | z)} \left (\ \frac{1}{(1-t)^2}\| \hat{x}^1_\theta(t, x)\|^2 - 2 \left \langle \hat{x}^1_\theta(t, x),x_1 \right \rangle \right )
\end{align}

By bilinearity of the 2-norm and since $x_1$ is independent of $\theta$. Next,
\begin{align*}
\E_{p_t(x)} \frac{1}{(1-t)^2}\| \hat{x}^1_\theta(t, x)\|^2 
&= \int \|\hat{x}^1_\theta(t, x)\|^2 p_t(x) dx \\
&= \iint \|\hat{x}^1_\theta(t, x)\|^2 p_t(x| z) q(z) dz dx \\
&= \E_{q(z), p_t(x | z)} \|\hat{x}^1_\theta(t, x)\|^2
\end{align*}
Finally,
\begin{align*}
\E_{p_t(x)} \left \langle \hat{x}^1_\theta(t, x),x_1 \right \rangle
&= \int \left \langle  \hat{x}^1_\theta(t, x), \frac{\int x_1 p_t(x | z) q(z) dz}{p_t(x)} \right \rangle p_t(x) dx \\
&= \int \left \langle \hat{x}^1_\theta(t, x), \int x_1 p_t(x | z) q(z) dz\right \rangle dx \\
&= \iint \left \langle  \hat{x}^1_\theta(t, x), x_1 \right \rangle p_t(x | z) q(z) dz dx \\
&= \E_{q(z), p_t(x | z)} \left \langle  \hat{x}^1_\theta(t, x), x_1 \right \rangle
\end{align*}
Where we first substitute then change the order of integration for the final equality. Since at all times $t$ the gradients of $\LFM$ and $\mathcal{L}_{TFM}$ are equal, $\nabla_\theta \LFM(\theta) = \nabla_\theta \mathcal{L}_{TFM}$

by substitution.
\begin{align}
    \mathcal{L}
    \E_{t, q(z), p_t(x | z)} \| v_\theta(t, x) - u_t(x | z)\|^2 = \E_{t, q(z), p_t(x | z)} \frac{1}{({\tc}-t)^2}\| \hat{x}^{\tc}_\theta(t, x) - x^{\tc}\|^2
\end{align}

\begin{align}
    \E_{t, q(z), p_t(x | z)} \| v_\theta(t, x) - u_t(x | z)\|^2 &= \E_{t, q(z), p_t(x | z)} \left \| \frac{\hat{x}^{\tc}_\theta(t, x) - x}{{\tc} - t} - \frac{x^{\tc} - x}{{\tc} - t} \right \|^2 \\
    &= \E_{t, q(z), p_t(x | z)} \frac{1}{({\tc}-t)^2}\| \hat{x}^{\tc}_\theta(t, x) - x^{\tc}\|^2
\end{align}
    
\end{proof}
 
\paragraph{Lemma 3.1}\textit{
    The SDE $d x_t = u_t(x | z) dt + \sigma^2 d W_t$ where $u_t$ is defined in \eqref{eq:tfm:flow} generates $p_t(x | z)$ in \eqref{eq:tfm:pt} with initial condition $p_0 := \delta_{x_1}$ where $\delta$ is the Dirac delta function.}
\begin{proof}
    For simplicity of notation we first show the case where $\tc = 1$. 
    \begin{align}
        d x_t = u_t(x | z) dt + \sigma^2 d W_t = \frac{1-x_t}{1-t} dt + \sigma^2 d W_t
    \end{align}
    which is equivalent to the $d$ dimensional Brownian bridge which has marginal
    \begin{align}
        \mathcal{N}((1-t) x_0 + t x_1, \sigma^2 t (1-t))
    \end{align}
    which completes the proof for $\tc=1$.
\end{proof}

\paragraph{Proposition 3.2 (Coupling Preservation)}\textit{
    Under mild regulatory criteria on $u_t(\cdot | z)$, $p_t$, and $q$, if 
    $$
    \mathbb{E}_{t\sim \mathcal{U}(0,T), z\sim q(z), c \sim q(c | z), x_t \sim p_t(x_t | z)} \| u_t(x_t | z, c) - u_t(x_t | c) \|_2^2 = 0
    $$
    and $z, q(z), p_t(x|z),$ and $u_t(x|z)$ are as defined in eqs.~\ref{eq:tfm:z}-\ref{eq:tfm:flow} then $\Pi(u)^\star = \Pi^\star(x_{1:T})$.}
\begin{proof} 
We prove the deterministic case with $T=1$. The extensions to stochastic and $T > 1$ are evident. The couplings are equal if the marginal vector field $u_t(x_t | c) = u_t(x_t|z, c)$ everywhere as the coupling is governed by the push forward flows $\phi(x_0, c) = \int_0^1 u_t(x_t | c) dt$, and $\phi(x_0, c, z) = \int_0^1(u_t(x_t | z, c)$. If 
$$
    \mathbb{E}_{t\sim \mathcal{U}(0,T), z\sim q(z), c \sim q(c | z), x_t \sim p_t(x_t | z)} \| u_t(x_t | z, c) - u_t(x_t | c) \|_2^2 = 0
    $$
then $\phi(x_0, c, z) = \phi(x_0, c)$ for all $x_0$ and therefore the couplings of the optimal map are equivalent. We note that this requires exchange of integrals under the same conditions as \textbf{Lemma A.1}.
\end{proof}
Next we show how \textbf{(A1)-(A3)} satisfy Prop.\ 3.2.
\begin{enumerate}
    \item[\textbf{(A1)}] When $c = x_0$ and there exists $T:\mathcal{X} \to \mathcal{X}$ such that $T(x_0) = x_1$ if and only if $\Pi^\star(x_0, x_1)$. We note that this is equivalent to asserting the existence of a Monge map $T^\star$ for the coupling $\Pi^\star$.
    
    In the two timepoint case, $c=x_0$ is sufficient as long as there aren't two trajectories that have the same $x_0$ but different $x_1$s. Conditioning on this way ensures the conditions of of Prop. 3.2 as the uniqueness property ensures the uniqueness of $u_t(x_t | c)$.
    
    \item[\textbf{(A2)}] There exist no two trajectories $x^i$, $x^j$ such that $x^i_t = x^j_t$ for $h+1$ consecutive observations.

    In this case notice that this is simply a multi-timepoint extension of \textbf{A1} to $c = x_{t-h-1:t-1}$, i.e.\ conditioned on a history of length $h$. If this is the case then the same reasoning as \textbf{A1} applies. 
    
    \item[\textbf{(A3)}] Trajectories are associated with unique conditional vectors $c$ independent of $t$. 
    
    This satisfies Prop 3.2 by definition. 
\end{enumerate}

\paragraph{Proposition 3.3}
There exists a scaling function $c(t): \mathbb{R}_+ \to \mathbb{R}$ such that
    $\mathcal{L}_\text{target}(\theta) = c(t) \mathcal{L}_\text{match}(\theta)$.
\begin{proof}
    We start with the matching loss.
    \begin{align}
    \E_{t, q(z), p_t(x | z)} \| v_\theta(t, x) - u_t(x | z)\|^2 = \E_{t, q(z), p_t(x | z)} \frac{1}{({\tc}-t)^2}\| \hat{x}^{\tc}_\theta(t, x) - x^{\tc}\|^2
\end{align}
by substitution,
\begin{align}
    \E_{t, q(z), p_t(x | z)} \| v_\theta(t, x) - u_t(x | z)\|^2 &= \E_{t, q(z), p_t(x | z)} \left \| \frac{\hat{x}^{\tc}_\theta(t, x) - x}{{\tc} - t} - \frac{x^{\tc} - x}{{\tc} - t} \right \|^2 \\
    &= \E_{t, q(z), p_t(x | z)} \frac{1}{({\tc}-t)^2}\| \hat{x}^{\tc}_\theta(t, x) - x^{\tc}\|^2
\end{align}
completing the proof.
\end{proof}

\section{Experimental Details} 
\label{appendix:data}
\subsection{1D Oscillators}
The three oscillation trajectories correspond to $c=0.25$ (the red trajectory in Figure \ref{fig:crossing_oscillation_exp}), $c=2$ (blue), and $c=3.75$ (green). Before used as an input, $t$ was scaled to between 0 and 1 by dividing by 10.

\subsection{Clinical Datasets} \label{appendix: clinical data}

\begin{figure} %
    \centering
    \includegraphics[width=0.5\columnwidth]{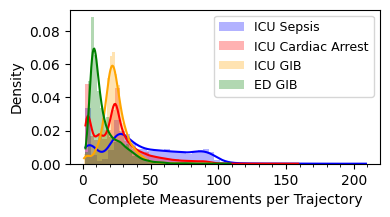}
    \includegraphics[width=0.5\columnwidth]{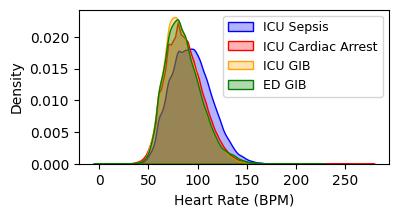}
    \caption{\textbf{Left}: Distribution of number of complete vital measurements per patient trajectory within the first 24 hours of admission in each clinical dataset. \textbf{Right}: Distribution of raw heart rate values in each clinical dataset.}
    \label{fig:clinical_data_dist}
\end{figure}

\subsubsection{Clinical Data Characteristics}
In order to accurately model the perturbations in the physiologic signals (mean arterial pressure and heart rate) of the underlying patient states,  we need to learn beyond the general trend of the data. 

While the physiologic measurements themselves reflect patient status and drive clinical decision making, the degree of variation holds information that goes beyond the snapshot at a single time point. Our approach models the data distribution and stochasticity rather than just fitting the average trajectory. Other time-varying such as treatment conditions and non-time-varying covariates such as underlying disease states may also hold information that may impact the underlying state generating the physiologic signals. Our approach also  incorporates this information to inform the trajectory modeling.

The data distribution in the ICU datasets reflect its status as the most resource-intensive clinical setting with increased measurement frequency and data distribution shift towards more abnormal physiologic values (Figure \ref{fig:clinical_data_dist}). The ED dataset reflects its status as the clinical setting focused on triaging patients, with sparser and  physiologic measurements that fall in the normal range.

\subsubsection{Clinical Data Preprocessing}
For each clinical dataset, we modeled patient trajectories formed with heart rate and blood pressure measurements during the first 24 hours following admission. The timeline for each trajectory, originally in minutes, was scaled to a range between 0 and 1 by dividing by 1440. Additionally, heart rate and blood pressure values were z-score normalized to standardize the data.

\paragraph{Intensive Care Unit Sepsis (ICU Sepsis) Dataset}
The eICU Collaborative Research Database v2.0 \citep{eICU} is a database including deidentified information collected from over 200,000 patients in multiple intensive care units (ICUs) in the United States from 2014 to 2015. The ICU Sepsis Dataset was created by subsetting the eICU Database for 3362 patients with sepsis as the primary admission diagnosis (2689 patients in training set, 336 in validation set, and 337 in test set). The following data fields were extracted: patient sex, age, heart rate, mean arterial pressure, norepinephrine dose and infusion rate, and a validated ICU score (APACHE-IV). Each patient's complete pair measurements of heart rate and mean arterial pressure over time form one trajectory to be modeled.

Norepinephrine infusion rates were calculated by converting drug doses or infusion rates to $\mu$g/kg/min, and where drug doses were not explicitly available, the dose was inferred from the free text given in the drug name. Start and end times for norepinephrine infusion were calculated by dividing the dose by the infusion rate. Where there appeared to be multiple infusions at the same time, the maximum infusion rate was taken as the infusion rate. As a conditional input to the models, the norepinephrine infusion doses are then scaled to between 0 and 1 by dividing by the maximum norepinephrine value in the dataset. 

The APACHE-IV score, a validated critical care risk score, predicts individual patient mortality risk \citep{zimmerman2006acute}. In data preprocessing, we uses logistic regression of the score against binary hospital mortality data to generate a probability for each patient, serving as an additional input condition for models.

\paragraph{Intensive Care Unit Cardiac Arrest (ICU Cardiac Arrest) Dataset}
This dataset was extracted from the to eICU Collaborative Research Database v2.0 \citep{eICU} described above to reflect ICU patients at risk for cardiac arrest. This dataset excludes patients who presented with myocardial infarction (MI) and includes variables used in the Cardiac Arrest Risk Triage (CART) score \citep{CART}: respiratory rate, heart rate, diastolic blood pressure, and age at the time of ICU admission. As an input to the model, the age was z-score normalized.
51671 patients were included in the training set, with 6459 patients each in the validation and test sets.

\paragraph{Intensive Care Unit Acute Gastrointestinal Bleeding (ICU GIB) Dataset} 
The Medical Information Mart for Intensive Care III (MIMIC-III) critical care database contains data for over 40,000 patients in the Beth Israel Deaconess Medical Center from 2001 to 2012 requiring an ICU stay \citep{johnson2016mimic}. We selected a cohort of 2602 ICU patients with the primary diagnosis of gastrointestinal bleeding to form the ICU GIB dataset, split into a training set of 2082 patients, and a validation set and a test set of 260 patients each. We extracted the following variables: age, sex, heart rate, systolic blood pressure, diastolic blood pressure, usage of vasopressor, usage of blood product, usage of packed red blood cells, and liver disease. Since the vasopressor and blood product usage are encoded as a binary value and may not represent actual infusion amount that are most likely decaying, we experimented with adding a Gaussian decay to them to use as conditional inputs.
Likewise, trajectories to model consist of complete pairs of heart rate and mean arterial pressure (calculated from systolic blood pressure and diastolic blood pressure) measurements.

\paragraph{Emergency Department Acute Gastrointestinal Bleeding (ED GIB) Dataset} This dataset reflects 3348 patients presenting with signs and symptoms of acute gastrointestinal bleeding to two hospital campuses in  Yale New Haven Hospital between 2014 and 2018. The patients were split into a training set, a validation set, and a test set of 2636, 352, and 360 patients. Variables extracted include patient sex, age, heart rate, mean arterial pressure, initial measurements of 24 lab tests, and 17 pre-existing medical conditions as determined by ICD-10 codes. Like ICU Sepsis data, the trajectoires consist of complete pairs of heart rate and mean arterial pressure measurements.

Age, initial lab test measurements (three labs omitted due to missing data), and pre-existing medical conditions were used to train an XGBoost model \citep{chen2016xgboost} to predict the binary outcome variable indicating the need for hospital-based care. The resulting probabilities of requiring hospital-based care (outcome of 1) for each patient were then calculated using the trained model and used as conditional input to conditional models in experiments on this dataset.

Of note, the outcome variable was defined as 1 if a patient (1) requires red blood cell transfusion, (2) requires urgent intervention (endoscopic, interventional radiologic, or surgical) to stop bleeding or (3) all-cause 30-day mortality. 
Labs and medical conditions included in this dataset are listed below. Labs in bold were excluded from the XGBoost risk score calculation due to missing data.
\begin{itemize}
    \item \textbf{Labs:} Sodium, Potassium, Chloride, Carbon Dioxide, Blood Urea Nitrogen, Creatinine, International Normalized Ratio, \textbf{Partial Thromboplastin Time}, White Blood Cell Count, Hemoglobin, Platelet Count, Hematocrit, Mean Corpuscular Volume, Mean Corpuscular Hemoglobin, Mean Corpuscular Hemoglobin Concentration, Red Cell Distribution Width, Red Blood Cell Count, Aspartate Aminotransferase, Alanine Aminotransferase, Alkaline Phosphatase, Total Bilirubin, \textbf{Direct Bilirubin}, Albumin, \textbf{Lactate}.
    \item \textbf{Previous Medical Histories:} Charlson Comorbidity Index, Cerebrovascular Accident, Deep Vein Thrombosis, Pulmonary Embolism, Atrial Fibrillation, Upper Gastrointestinal Bleeding, Lower Gastrointestinal Bleeding, Unspecified Gastrointestinal Bleeding, Peptic Ulcer Disease, Helicobacter Pylori Infection, Coronary Artery Disease, Heart Failure, Hypertension, Type 2 Diabetes Mellitus, Chronic Kidney Disease, Alcohol Use Disorder, Cirrhosis.
\end{itemize}

\subsection{Training}
\subsubsection{1D Oscillators}
Since the trajectories in this dataset are deterministic and regularly sampled, we deployed only \algnameode{} and applied solely the $\mathcal{L}_\text{match}$ loss (i.e, no uncertainty or time predictive loss), as these methods sufficiently address the structured nature of the data to generate proof-of-concept results. The three models presented in Figure \ref{fig:crossing_oscillation_exp} all have hidden size of 256, $\sigma$ of 0.1, trained under seed=0 with Adam optimizer with learning rate $1\times10^{-3}$ for a maximum of 1000 epochs with early stopping (patience=3) monitoring validation loss.

\subsubsection{Clinical Data}
All the models for clinical data experiments are trained with Adam optimizer. A maximum training time and epochs are set to 48 hours and 300, with early stopping (patience=3) monitoring validation loss. All metrics reported were ran with 5 seeds (0,1,2,3,4) to ensure it is reproducible.

\paragraph{\algname{}, \algnameode{}, and ablations}
The TFM models were trained with learning rate $1\times10^{-6}$ and had $\sigma$ of 0.1. The complete models have hidden size of 256 and memory of 3, while ablation study with a hidden size of 64 and/or no memory was performed (Table \ref{tab:model_ablation}). The noise parameter for the SDE implementation was set to 0.1 for ablations without $\mathcal{L}_\text{uncertainty}$.
The hyperparameters $\sigma=0.1$ and memory=3 for full models were selected through experiments with different values of $\sigma$ and memory (Figure \ref{fig:sigma} and \ref{fig:memory}).

\paragraph{FM}
The FM baseline models were trained with learning rate $1\times10^{-6}$. All models had a hidden size of 64 with $\sigma$ of 0.1. 

\paragraph{Latent Neural ODE}
The latent Neural ODE models were trained with a learning rate of $1\times10^{-3}$. 100 GRU units were used for the encoder model and the number of latent dimensions was 2.  

\paragraph{Baseline Neural SDE and Neural ODE}
Both baseline models were trained with learning rate $1\times10^{-5}$ and had a hidden size of 64.

\begin{figure}
    \centering
    \includegraphics[width=1.0\columnwidth]{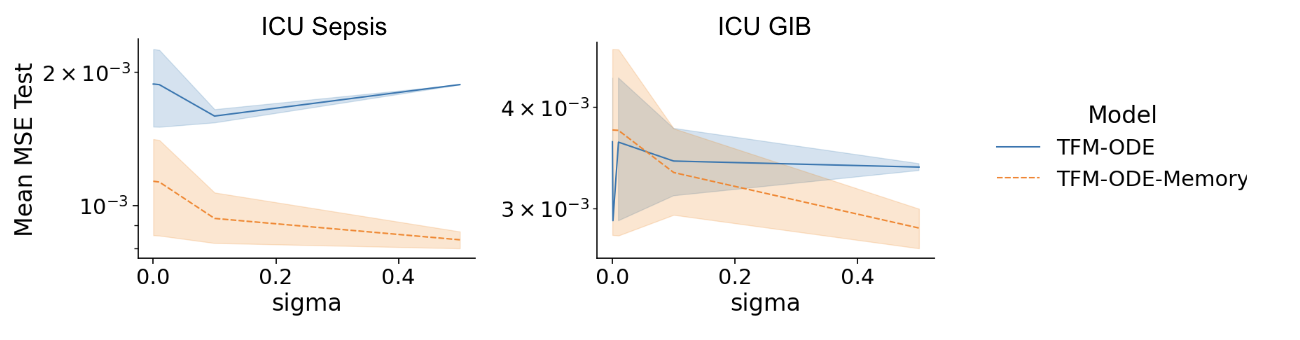}
    \caption{Sigma mean MSE comparison}
    \label{fig:sigma}
\end{figure}

\begin{figure}
    \centering
    \includegraphics[width=0.75\columnwidth]{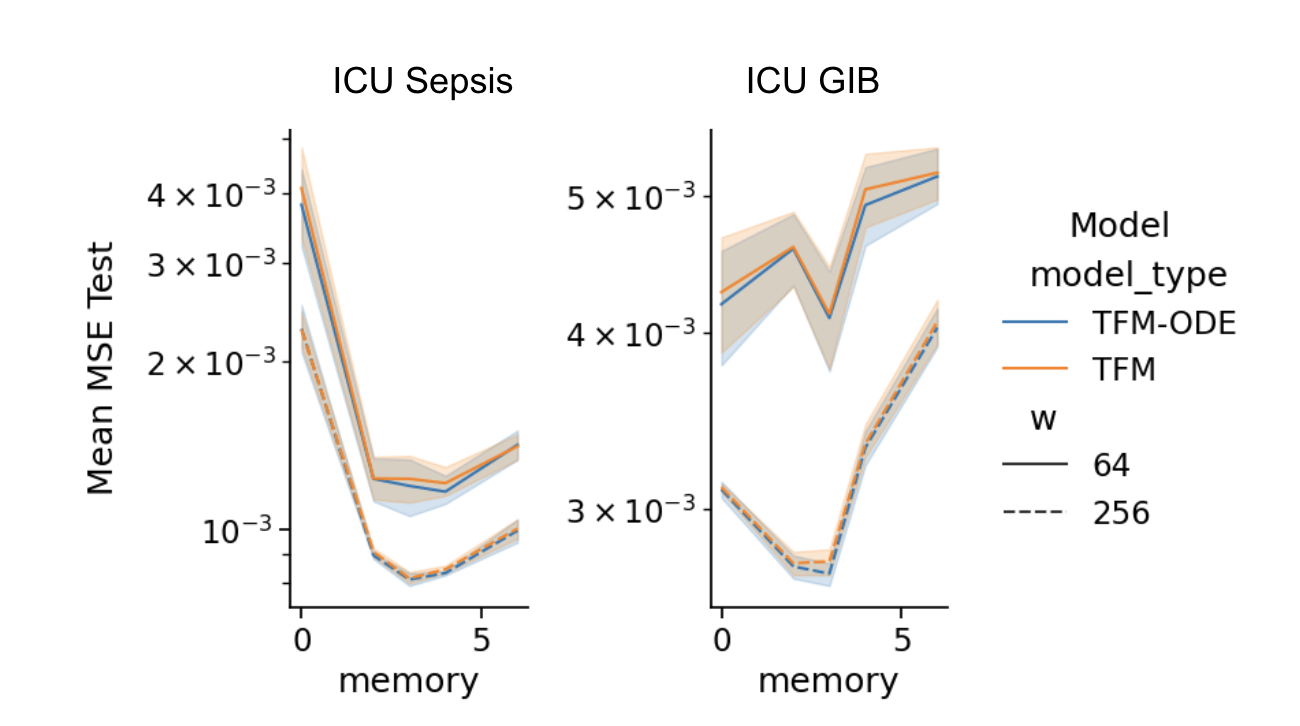}
    \caption{Memory Mean MSE comparison}
    \label{fig:memory}
\end{figure}

\end{document}